\documentclass[11.7pt,twocolumn,a4paper]{lumia}

\usepackage[table]{xcolor}
\usepackage{graphicx}
\usepackage{tcolorbox}
\tcbuselibrary{breakable}
\usepackage{booktabs}

\usepackage{multirow}
\usepackage{algorithm}
\usepackage{algorithmic}
\usepackage{subcaption}
\usepackage{calc}
\usepackage{cleveref}
\usepackage{amsthm}
\newtheorem{theorem}{Theorem}

\newtheorem{assumption}{Assumption}

\usepackage{orcidlink}
\usepackage[square,authoryear]{natbib}

\setheadertext{ }
\newcommand{\alggray}[1]{{\color{gray}#1}}
\definecolor{darkgreen}{rgb}{0.0, 0.5, 0.0}
\author[1]{Zitong Wang}
\author[2]{Zijun Shen}
\author[3]{Haohao Xu}
\author[1]{Zhengjie Luo}
\author[1]{Weibin Wu}
\affil[1]{School of Software Engineering, Sun Yat-sen University}

\affil[2]{Nanjing University}
\affil[3]{College of Management and Economics, Tianjin University}

\begin{document}





\title{Delta-K: Boosting Multi-Instance Generation via Cross-Attention Augmentation}

\begin{abstract}
While Diffusion Models excel in text-to-image synthesis, they often suffer from concept omission when synthesizing complex multi-instance scenes. Existing training-free methods attempt to resolve this by rescaling attention maps, which merely exacerbates unstructured noise without establishing coherent semantic representations. To address this, we propose Delta-K, a backbone-agnostic and plug-and-play inference framework that tackles omission by operating directly in the shared cross-attention Key space.  Specifically, with Vision-language model, we extract a differential key $\Delta K$ that encodes the semantic signature of missing concepts. This signal is then injected during the early semantic planning stage of the diffusion process. Governed by a dynamically optimized scheduling mechanism, Delta-K grounds diffuse noise into stable structural anchors while preserving existing concepts. Extensive experiments demonstrate the generality of our approach: Delta-K consistently improves compositional alignment across both modern DiT models and classical U-Net architectures, without requiring spatial masks, additional training, or architectural modifications.

\end{abstract}
\maketitle

\begin{figure*}
    \centering  %
    \includegraphics[width=1\textwidth]{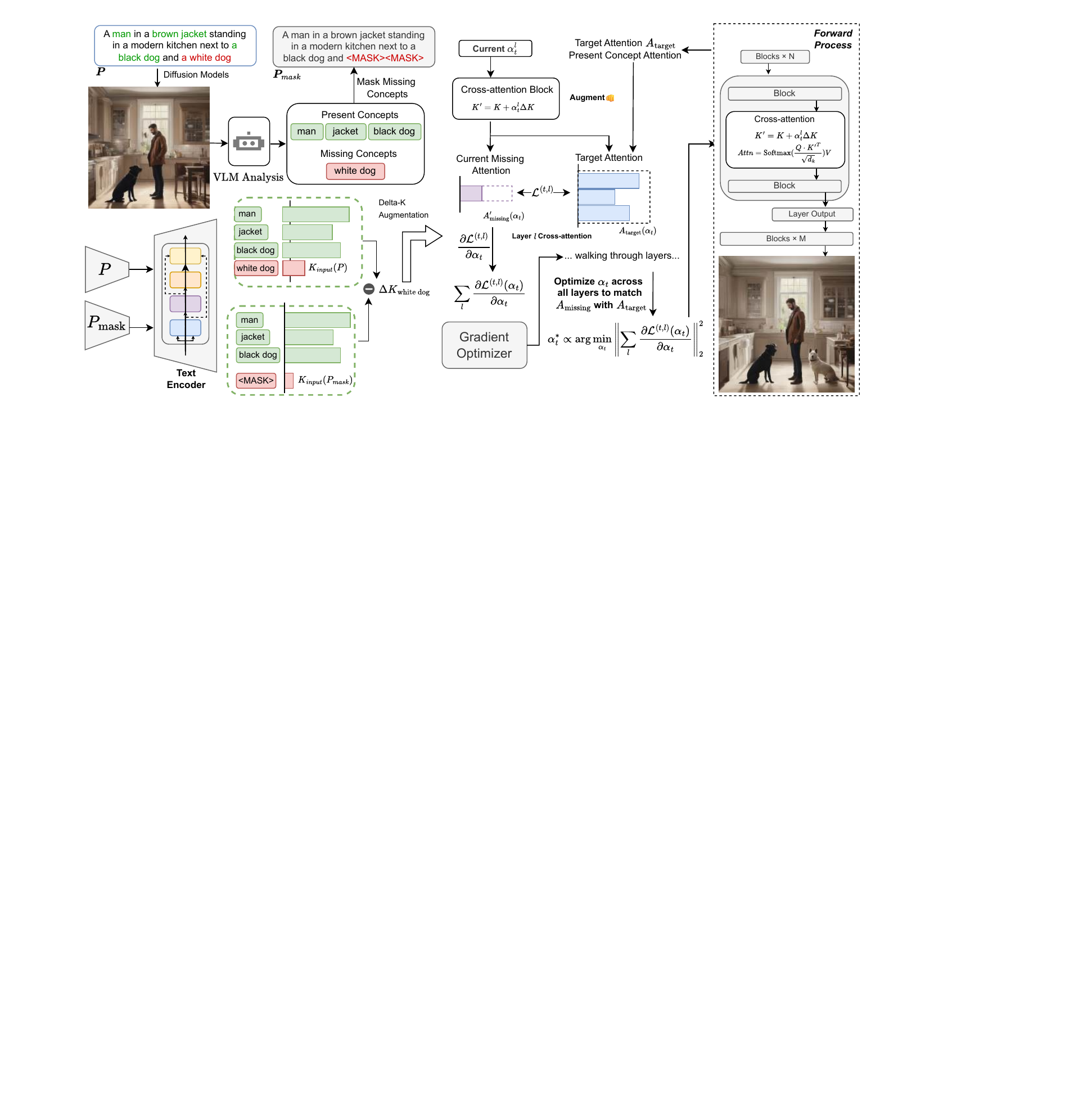} 
    \caption{\textbf{Overview of Delta-K.} A VLM first separates present and missing concepts from a baseline generation. By contrasting the original and masked prompts, we obtain a differential key vector $\Delta K$, which is dynamically injected into cross-attention keys during sampling to reinforce missing concepts while preserving existing content.}
\end{figure*}
\section{Introduction}
\label{sec:intro}

The recent success of large-scale text-to-image diffusion models has dramatically advanced open-domain visual synthesis. By scaling both data and model capacity, modern Latent Diffusion Models (LDMs)---spanning both convolutional U-Net architectures~\citep{podellsdxl} and the recently dominant Diffusion Transformers (DiTs)~\citep{peebles2023scalable}---have achieved achievable perceptual quality and flexibility. Despite that, achieving faithful multi-instance generation remains a persistent bottleneck. When confronted with compositional prompts specifying multiple objects and attributes, even state-of-the-art models frequently suffer from  concept omission and incorrect attribute binding~\citep{huang2023t2i}.

To mitigate these failures, recent work explores training-free inference-time interventions as a practical alternative to expensive fine-tuning. Most methods identify neglected textual tokens and increase their influence by modulating cross-attention responses during inference~\citep{hertz2022prompt, chefer2023attend, liu2024training}. While intuitive, these approaches treat concept omission as an activation imbalance and rely on heuristic rescaling of attention maps. Without a coherent structural representation, amplifying diffuse attention responses often increases background noise rather than grounding the missing semantics. Another line of work introduces explicit spatial constraints, such as bounding boxes or layout conditions, to guide object placement in multi-instance generation. Although these approaches improve controllability in structured settings, they require additional spatial annotations or predefined layouts, which limits their applicability in open-domain generation. More importantly, such constraints regulate spatial arrangement externally without addressing the semantic retrieval process within cross-attention~\citep{li2023gligen,xie2023boxdiff,feng2022training}. As a result, they often sacrifice compositional flexibility and fail to resolve the mismatch between textual concepts and the internal representations of diffusion models.


We argue that concept omission is not simply an activation deficiency, but a failure in the semantic matching stage ($QK^T$) of the cross-attention mechanism. When visual queries ($Q$) cannot retrieve stable semantic anchors from textual keys ($K$) the resulting attention maps become diffuse and poorly grounded. Moreover, we observe that the structural fate of concepts is largely determined during the earliest stages of the denoising process.

Driven by these insights, we adopt a concept-centric, representation-level perspective and propose \textbf{Delta-K}, a backbone-agnostic, training-free inference framework that  addresses concept omission directly in the shared cross-attention Key space. Rather than reweighting attention maps, Delta-K injects the missing semantic signatures into the model's internal representations during the early semantic planning phase.

Specifically, we first perform a low-step exploratory generation to obtain a coarse baseline image. A Vision-Language Model (VLM) then analyzes the preview and partitions the prompt into ``present'' and ``missing'' concepts. We construct a masked prompt by replacing the missing concepts with \texttt{[MASK]} tokens. By contrasting the input between the original and masked prompts, we isolate a differential key vector $\Delta K$, which encapsulates the semantic signature of the omitted concepts. During the full generation process, we inject $\Delta K$ into the key stream. Owing to the semantic orthogonality of $\Delta K$, this intervention supplements missing concepts while preserving those that are already correctly rendered.

To regulate this intervention, Delta-K introduces a dynamic scheduling strategy. Instead of relying on rigid timestep schedules, we perform a lightweight online optimization of the augmentation coefficient $\alpha_t$ at each denoising step. By setting a target attention distribution derived from the successfully generated concepts in the baseline, we optimize $\alpha_t$ to guide the attention allocation of the missing concepts toward this stable target. This allows missing concepts to gradually evolve from diffuse noise into localized structural anchors.

In summary, our main contributions are as follows:
\begin{itemize}
\item We provide a novel perspective on multi-instance generation failures, demonstrating that concept omission is a representation-level semantic matching failure rather than an activation deficit, which occurs during the early semantic planning phase of diffusion sampling.
\item We propose \textbf{Delta-K}, a principled, training-free intervention framework that structurally resolves concept omission by directly injecting a VLM-guided differential semantic signature ($\Delta K$) into the cross-attention key space, proving universally applicable to both DiT and U-Net architectures.
\item We introduce a dynamic scheduling mechanism that optimizes the injection strength ($\alpha_t$) online, ensuring stable conceptual grounding while preserving existing instances through the natural orthogonality of the key space.
\item Extensive evaluations on challenging benchmarks demonstrate that Delta-K significantly improves text-image alignment across diverse architectural paradigms over existing state-of-the-art baselines without incurring training costs or architectural modifications.
\end{itemize}

\begin{figure*}[t]
    \centering  %
    \includegraphics[width=1\textwidth]{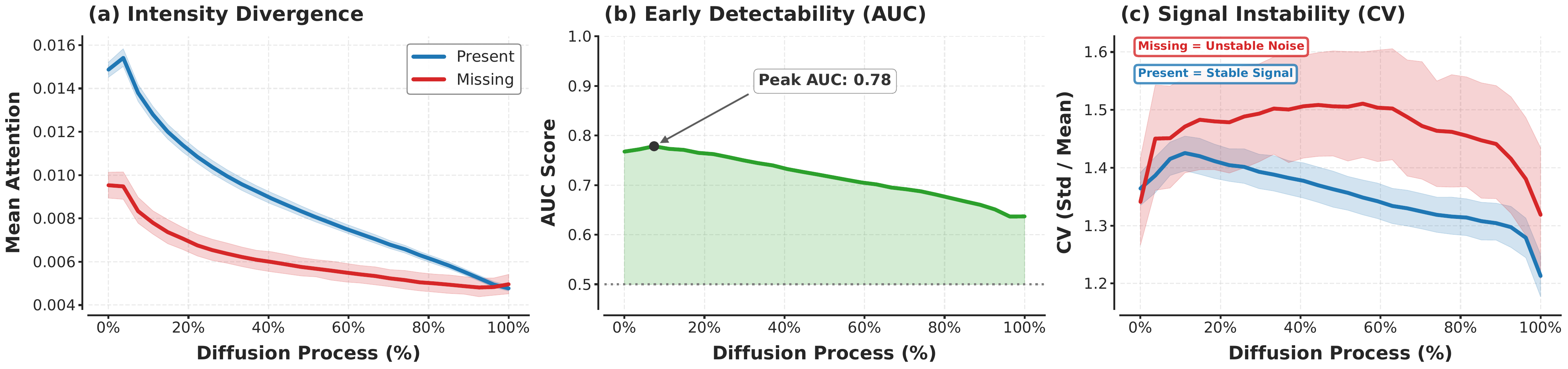} 
\caption{\textbf{Spatiotemporal dynamics of attention in SD3.5.} 
\textbf{(a)} Missing concepts suffer from chronic intensity suppression but follow valid temporal trends. 
\textbf{(b)} The high early AUC identifies a semantic planning phase for intervention before image structure solidifies. 
\textbf{(c)} High instability (CV) characterizes missing tokens as unstable noise.}
    \label{fig:analysis}
\end{figure*}

\section{Related Work}
\paragraph{Diffusion Models.} 
Diffusion models~\citep{ho2020denoising, songdenoising} have fundamentally advanced text-to-image synthesis ~\citep{nichol2022glide, ramesh2022hierarchical}. Early dominant frameworks relied on U-Net architectures, notably the Stable Diffusion series up to SDXL~\citep{rombach2022high, podellsdxl}. Recently, inspired by Vision Transformers~\citep{dosovitskiy2020image}, Diffusion Transformers (DiTs)~\citep{peebles2023scalable} have emerged as the new standard, with modern backbones like SD3.5~\citep{esser2024scaling} and FLUX~\citep{flux2024} demonstrating superior scalability. Despite these architectural leaps, accurately grounding multiple instances remains a persistent bottleneck~\citep{jiang2024comat}, frequently leading to severe concept omission in complex scenes~\citep{zhou2025dreamrenderer}. Delta-K directly addresses this vulnerability across both U-Net and DiT paradigms without requiring structural modifications.
\paragraph{Multi-Instance Generation.} 
To mitigate multi-instance generation failures, numerous methods impose structural or semantic guidance. Structure-aware approaches introduce external adapters~\citep{li2023gligen, chen2024training, kim2023dense} or layout priors~\citep{xie2023boxdiff, phung2024grounded, wang2024divide} for composable synthesis. Concurrently, other works inject semantic feedback~\citep{xu2023imagereward, wu2024deep, maexploring} or text-guided reasoning~\citep{yang2024mastering, sun2025dreamsync} during sampling to refine alignment~\citep{ren2024grounded, wen2023improving}. However, fine-grained semantic binding remains fundamentally challenging~\citep{huang2023t2i, yao2025concept}. Methods requiring auxiliary training or reinforcement~\citep{fan2023reinforcement, blacktraining, fang2023boosting} incur substantial computational overhead. Conversely, while training-free attention interventions~\citep{chefer2023attend, rassin2023linguistic, meral2024conform} offer flexibility, they typically rely on post-hoc, heuristic scaling of attention maps~\citep{wang2024tokencompose, agarwal2023star, li2023divide}. This superficial reweighting provides insufficient representational control as prompt complexity increases. Delta-K structurally overcomes this by operating directly in the cross-attention Key space, dynamically injecting omitted semantics at the feature level rather than merely rescaling output attention scores.

\section{Motivation: Concept Omission in Latent Space}
\label{sec:analysis}

\subsection{Preliminary}
Modern state-of-the-art generation models, spanning both U-Net architectures (e.g., SDXL~\citep{podellsdxl}) and Diffusion Transformers (e.g., SD3.5~\citep{esser2024scaling}), fundamentally operate as Latent Diffusion Models (LDMs). Given a latent representation $z_0$ encoded from an image, the model learns to reverse a forward noise process by optimizing the denoising objective:
\begin{equation}
\mathcal{L}=\mathbb{E}_{z_0,t,\varepsilon}\left[\|\varepsilon-\varepsilon_\theta(z_t,t,c)\|_2^2\right]
\end{equation}
where $\varepsilon_\theta$ is the network conditioned on textual information $c$.

Across both architectural paradigms, this textual conditioning $c$ is dominantly injected into the visual stream via the cross-attention mechanism. For spatial queries $Q$ derived from the visual latent, and keys $K$ and values $V$ projected from the text embeddings, the attention map $A$ is formulated as:
\begin{equation}
    A = \text{Softmax}\left(\frac{QK^T}{\sqrt{d_k}}\right), \quad \text{Output} = AV
\end{equation}
Crucially, in this formulation, the inner product $QK^T$ represents the \textit{semantic matching} phase. $Q$ dictates the spatial regions seeking information, while $K$ encodes the specific semantic identity of the text tokens. Most existing training-free interventions treat concept omission as a mere activation deficiency, attempting to forcefully rescale the output attention map $A$ or the value $V$. We argue that this post-hoc scaling is suboptimal because if the semantic identity within $K$ fails to establish a matching target for $Q$, the resulting attention map is fundamentally flawed.

\subsection{Dynamics of Failure}
\label{subsec:window}

To understand why semantic matching ($QK^T$) fails for certain concepts, we systematically analyze the spatiotemporal dynamics of cross-attention maps during multi-instance generation. 

\paragraph{Early Determinism: The Semantic Planning Phase.}
We observe that concept omission is not a gradual decay, but rather a failure established early in the denoising process. As shown in Figure~\ref{fig:analysis}(a), concepts that are ultimately omitted (``Missing'') exhibit persistently low attention magnitudes right from the initial timesteps. To quantify this early predictability, we compute the AUC-ROC for detecting omission based on early attention activations. As illustrated in Figure~\ref{fig:analysis}(b), discriminative power peaks abruptly, reaching an AUC of $\approx 0.78$ at Step 2 (for SD3.5). This reveals a critical \textit{semantic planning phase}: the presence or absence of a concept is largely determined during the earliest steps. Therefore, interventions must be applied proactively before the spatial layout solidifies.

\paragraph{Spatial Instability: Representation vs. Activation.}
\label{subsec:signal_noise}
Why do missing concepts fail to activate during this critical phase? To answer this, we shift our focus from attention magnitude to spatial distribution. We quantify spatial focus using the Coefficient of Variation ($CV$), the ratio of spatial standard deviation to the mean. Figure~\ref{fig:analysis}(c) reveals a stark contrast: successfully generated (``Present'') concepts maintain low $CV$ values, acting as concentrated, stable regions. In contrast, ``Missing'' concepts exhibit persistently high $CV$ values, appearing across the latent space as scattered, unstructured noise.

This spatial instability forms our core insight. It demonstrates that omitted tokens do not merely lack activation energy; they lack a coherent structural representation. Simply scaling up a scattered noise map (as prior attention-reweighting methods do) only amplifies noise, frequently degrading image quality. Instead, to resolve concept omission, we must intervene directly in the Key space ($K$) during the early semantic planning phase. By actively injecting the differential semantic signature ($\Delta K$) of the omitted concept, we force the query $Q$ to retrieve a stable semantic target, effectively focusing the scattered noise into a localized and coherent region.

\begin{algorithm}[t]
\caption{Delta-K Framework}
\label{alg:delta-k}
\begin{algorithmic}[1]
\REQUIRE prompt $P$, diffusion model $\mathcal{G}$, VLM $\mathcal{F}_{vlm}$, steps $T$
\STATE $I_{\text{base}} \gets \mathcal{G}(P)$ \alggray{\# baseline image}
\STATE $(C_{\text{present}},C_{\text{missing}})\gets \mathcal{F}_{vlm}(P,I_{\text{base}})$ \alggray{\# missing concepts}
\STATE $P_{\text{mask}}\gets \text{Mask}(P,C_{\text{missing}})$ \alggray{\# mask missing}
\STATE $\Delta K \gets K_{\text{input}}(P)-K_{\text{input}}(P_{\text{mask}})$ \alggray{\# differential key}
\STATE $\{A^{(t,l)}_{\text{target}}\}\gets \text{TargetAttn}(\mathcal{G},P,C_{\text{present}})$ \alggray{\# from baseline}

\FOR{$t=T$ \TO $1$}
    \STATE $\alpha_t \gets \text{Adam}\!\left(\alpha;\; \sum_l \left\|\text{Agg}(A^{(t,l)}_{\text{missing}}(\alpha))-A^{(t,l)}_{\text{target}}\right\|_2^2\right)$ \alggray{\# online optimization}
    \FOR{each layer $l$}
        \STATE $K^{(t,l)} \gets K^{(t,l)} +  \alpha_t \Delta K^{(t,l)}$ \alggray{\# augmentation}
        \STATE \alggray{Compute cross-attn with updated $K^{(t,l)}$}
    \ENDFOR
    \STATE \alggray{Run one denoising step of $\mathcal{G}$ to get $z_{t-1}$}
\ENDFOR
\RETURN final image $I$
\end{algorithmic}
\end{algorithm}

\section{Methodology}

\begin{table*}[!t]
  \centering
  \caption{Experiments on T2I-CompBench results. Higher is better ($\uparrow$).}
  \label{tab:t2i_compbench}
  \resizebox{1\textwidth}{!}{%
  \begin{tabular}{lcccccc}
    \toprule
    \multicolumn{1}{c}{\multirow{2}{*}{Method}} &
    \multicolumn{3}{c}{Attribute Binding} &
    \multicolumn{2}{c}{Object Relationship} &
    \multirow{2}{*}{Complex$\uparrow$} \\
    \cmidrule(lr){2-4} \cmidrule(lr){5-6}
    & Color$\uparrow$ & Shape$\uparrow$ & Texture$\uparrow$ & Spatial$\uparrow$ & Non-Spatial$\uparrow$ & \\
    \midrule
    StructureDiffusion~\citep{feng2022training} & 0.4980 & 0.4208 & 0.4880 & 0.1384 & 0.3113 & 0.3355 \\
    Composable Diffusion~\citep{liu2022compositional} & 0.4063 & 0.3299 & 0.3645 & 0.0800 & 0.2980 & 0.2898 \\
    TokenCompose~\citep{wang2024tokencompose} & 0.5055 & 0.4852 & 0.5881 & 0.1815 & 0.3173 & 0.2937 \\
    Playground-v2~\citep{playground-v2} & 0.6208 & 0.5087 & 0.6125 & 0.2372 & 0.3098 & 0.3613 \\

    \addlinespace[0.2em]
    \midrule

    SDXL~\citep{podellsdxl} & 0.5849 & 0.4667 & 0.5279 & 0.2111 & 0.3109 & 0.3230 \\
    +A\&E~\citep{chefer2023attend} & 0.6400 & 0.4517 & 0.5963 & 0.2212 & 0.3142 & 0.3401 \\
    +SynGen~\citep{rassin2023linguistic} & 0.6112 & 0.4831 & 0.5312 & 0.2310 & 0.3249 & 0.3192 \\

    \rowcolor{gray!15}
    +Delta-K (Ours) & 0.6371 & 0.5119 & 0.6218 & 0.2466 & 0.3175 & 0.3532 \\
    \rowcolor{gray!15}
    & \textcolor{darkgreen}{(+0.0522)} & \textcolor{darkgreen}{(+0.0452)} & \textcolor{darkgreen}{(+0.1169)} & \textcolor{darkgreen}{(+0.0355)} & \textcolor{darkgreen}{(+0.0066)} & \textcolor{darkgreen}{(+0.0302)} \\

    \midrule
    SD3.5-M~\citep{esser2024scaling} & 0.7939 & 0.5563 & 0.7381 & 0.3053 & 0.3146 & 0.3050 \\
    +A\&E & 0.7959 & 0.5612 & 0.7262 & 0.3310 & 0.3221 & 0.3122 \\
    +SynGen & 0.7982 & 0.5662 & 0.7183 & 0.3221 & 0.3129 & 0.3135 \\
    +InitNO & 0.8002 & 0.5591 & 0.7499 & 0.3211 & 0.3240 & 0.3215 \\

    \rowcolor{gray!15}
    +Delta-K (Ours) & 0.8125 & 0.5984 & 0.7816 & 0.3487 & 0.3331 & 0.3410 \\
    \rowcolor{gray!15}
    & \textcolor{darkgreen}{(+0.0186)} & \textcolor{darkgreen}{(+0.0421)} & \textcolor{darkgreen}{(+0.0435)} & \textcolor{darkgreen}{(+0.0434)} & \textcolor{darkgreen}{(+0.0185)} & \textcolor{darkgreen}{(+0.0360)} \\

    \bottomrule
  \end{tabular}
  }
\end{table*}

\subsection{Delta-K}
We propose Delta-K, a training-free method to resolve concept omission in Unet and DiT models by directly intervening in the cross-attention mechanism. The core idea is to compute a differential key vector, $\Delta K$, which isolates the semantic information of missing concepts identified from the text by a VLM. At each inference step, we inject $\Delta K$ into the input of the to\_k module according to a dynamic strength schedule, forcing the model to attend to these previously neglected concepts during image generation.

\paragraph{Identifying Missing Concepts.} Given a text prompt $P$, we first generate an image $I_{base}$ through a standard diffusion process. We then employ a Vision-Language Model $\mathcal{F}_{vlm}$ to return two sets of concepts:

\begin{equation}(C_{\text{present}},C_{\text{missing}})=\mathcal{F}_{vlm}(P,I_{base})\end{equation}

Where $C_{\text{present}}$ is the set of concepts successfully generated in the image, while $C_{\text{missing}}$ is the set of incorrectly generated or missing concepts, which are the targets for our subsequent enhancement.

\paragraph{Delta-K augmentation.} First, we construct the masked prompt $P_{\text{mask}}$ by replacing $C_{\text{missing}}$ with \texttt{[MASK]}, and capture the inputs to the \texttt{to\_k} module for both $P_{\text{mask}}$ and the original prompt $P$ during inference to obtain $K_{\text{input}}(P_{\text{mask}})$ and $K_{\text{input}}(P)$. Their difference $\Delta K$ illustrates the semantic representation of the missing concepts:

\begin{equation}
    \Delta K\triangleq K_{\text{input}}(P)-K_{\text{input}}(P_\text{mask})
\end{equation}

During the inferencing, for each layer and step $t$, we inject this pre-computed $\Delta K$ into the current step's key vector and compute the attention as follows:

\begin{equation}
    K^{\prime}=K+\alpha_t\cdot\Delta K, \quad \mathrm{Attn}{'}=\mathrm{Softmax}\left(\frac{Q^{} \cdot {K^{\prime}}^T}{\sqrt{d_k}}\right)V^{}
\end{equation}
The augmentation strength is controlled by a dynamic scheduling function $\alpha_t$. By directly augmenting the key vectors, our method  boosts the attention weights of missing concepts to ensure their generation in the final image.

\subsection{Dynamic scheduling}
Motivated by these observations, we introduce a dynamic scheduling method that adaptively adjusts the augmentation strength $\alpha_t$ at each denoising step $t$. The objective is to guide the attention received by the missing concepts $C_{missing}$ to match the attention pattern of successfully generated concepts.

We define $A_{\text{target}}$ the average attention received by the successfully generated concepts $C_\text{present}$ in baseline generation. At denoising step $t$ and layer $l$, given the baseline attention map $A^{(t,l)}_{\text{base}}$ and the token index set $I_{\text{present}}$, the target attention is computed as
\begin{equation}A^{(t,l)}_{\text{target}}(\alpha_t) = \frac{1}{|I_{\text{present}}|} \sum_{i \in I_{\text{present}}} A^{(t,l)}(\alpha_t)_{\text{base},i}\end{equation}
Similarly, the target attention distribution $A_{target}^{(t,l)}$ is obtained from the baseline generation, representing the  attention pattern of successfully generated concepts.

To encourage the attention distribution of missing concepts to match the target distribution, we minimize the following objective:
\begin{equation}
    \mathcal{L}^{(t,l)}(\alpha_t) = \left\| {A_{\text{missing}}^{\prime}}^{(t,l)}(\alpha_t) - A_{\text{target}}^{(t,l)} \right\|_2^2
\end{equation}
This objective encourages the model to gradually align the attention allocation of missing concepts with that of successful ones.
Since the loss function is differentiable with respect to the augmentation strength $\alpha_t$, we perform an online optimization at each denoising step $t$. The optimal coefficient is obtained by minimizing the aggregated gradient magnitude across layers:

\begin{equation}
    \alpha_t^* = \lambda \cdot \arg\min_{\alpha_t} \left\| \sum_l \frac{\partial \mathcal{L}^{(t,l)}(\alpha_t)}{\partial \alpha_t} \right\|_2^2
\end{equation}

In practice, we solve this optimization using the Adam optimizer~\citep{kingma2014adam}. This dynamic optimization allows the augmentation strength to adapt to the generation process, stabilizing the attention signal of missing concepts and transforming noisy attention patterns into coherent semantic representations. We augment the key and compute cross-attention in a standard form:
\begin{equation}
\label{eq:delta_k_full_attn}
\mathrm{Attn}^{(t,l)}(\alpha_t^*)=
\mathrm{Softmax}\!\left(
\frac{
Q^{(t,l)}\left(K^{(t,l)}+\alpha_t^*\Delta K^{(t,l)}\right)^{\top}
}{
\sqrt{d_k}
}
\right)V^{(t,l)}
\end{equation}

The overall Delta-K algorithm can be summarized as shown in Algorithm~\ref{alg:delta-k}.

\begin{table*}[!t]
  \centering
  \caption{Experiments on Geneval. Higher is better ($\uparrow$).}
  \label{tab:geneval}
  \resizebox{1\textwidth}{!}{%
  \begin{tabular}{lccccccc}
    \toprule
    \multirow{2}{*}{Model} &
    \multirow{2}{*}{Overall$\uparrow$} &
    Single & Two &
    \multirow{2}{*}{Counting} &
    \multirow{2}{*}{Colors} &
    \multirow{2}{*}{Position} &
    Color \\
    & & object & object & & & & attribution \\
    \midrule
    Seedream 3.0~\citep{gao2025seedream} & 0.99 & 0.96 & 0.91 & 0.93 & 0.47 & 0.80 & 0.84 \\

    \midrule
    \multicolumn{8}{l}{\textit{Earlier Generative Models}} \\
    minDALL-E~\citep{Dayma_DALLE_Mini_2021}                 & 0.23 & 0.73 & 0.11 & 0.12 & 0.37 & 0.02 & 0.01 \\
    Stable Diffusion v2.1     & 0.50 & 0.98 & 0.51 & 0.44 & 0.85 & 0.07 & 0.17 \\

    \midrule

    Stable Diffusion XL & 0.55 & 0.98 & 0.74 & 0.39 & 0.85 & 0.15 & 0.23 \\

    \rowcolor{gray!15}
    +Delta-K (Ours) & 0.58 & 0.98 & 0.79 & 0.42 & 0.88 & 0.15 & 0.29 \\
    \rowcolor{gray!15}
     & \textcolor{darkgreen}{(+0.03)} & \textcolor{darkgreen}{(+0.00)} & \textcolor{darkgreen}{(+0.05)} & \textcolor{darkgreen}{(+0.03)} & \textcolor{darkgreen}{(+0.03)} & \textcolor{darkgreen}{(+0.00)} & \textcolor{darkgreen}{(+0.06)} \\

    \bottomrule
  \end{tabular}
  }
\end{table*}

\begin{table*}[h]
\centering

\begin{minipage}{0.45\textwidth}
\centering
\caption{Diff Aug Method.}
\label{tab:conceptmix}
\resizebox{\textwidth}{!}{%
\begin{tabular}{lcc}
\toprule
Model & Complex $\uparrow$ & Spatial $\uparrow$ \\
\midrule
Baseline & 0.3230 & 0.2111 \\
Prompt-only & 0.3303 & 0.2186 \\
Constant & 0.3326 & 0.2322 \\
Linear & 0.3402 & 0.2365 \\
\rowcolor{gray!15}
\text{Delta-K} & \text{0.3532} & \text{0.2466} \\
\bottomrule
\end{tabular}
}
\end{minipage}
\begin{minipage}{0.51\textwidth}
\centering
\caption{Experiment on Conceptmix.}
\label{tab:rank_k567}
\resizebox{\textwidth}{!}{%
\begin{tabular}{lccc}
\toprule
Model & $k{=}5\uparrow$ & $k{=}6\uparrow$ & $k{=}7\uparrow$ \\
\midrule
DALL-E~\citep{betker2023improving} & 0.17 & 0.11 & 0.08  \\
SDXL & 0.05 & 0.01 & 0.00 \\
\rowcolor{gray!15}
+Delta-K (Ours) & 0.06 & 0.01 & 0.01\\
\rowcolor{gray!15}
& \textcolor{darkgreen}{(+0.01)} & \textcolor{darkgreen}{(+0.00)} & \textcolor{darkgreen}{(+0.01)} \\
FLUX-dev & 0.07 & 0.03 & 0.03 \\
\rowcolor{gray!15}
+Delta-K (Ours) & 0.09 & 0.04 & 0.03 \\
\rowcolor{gray!15}
& \textcolor{darkgreen}{(+0.02)} & \textcolor{darkgreen}{(+0.01)} & \textcolor{darkgreen}{(+0.00)} \\
\bottomrule
\end{tabular}
}
\end{minipage}

\end{table*}

\begin{figure*}
    \centering  %
    \includegraphics[width=1\textwidth]{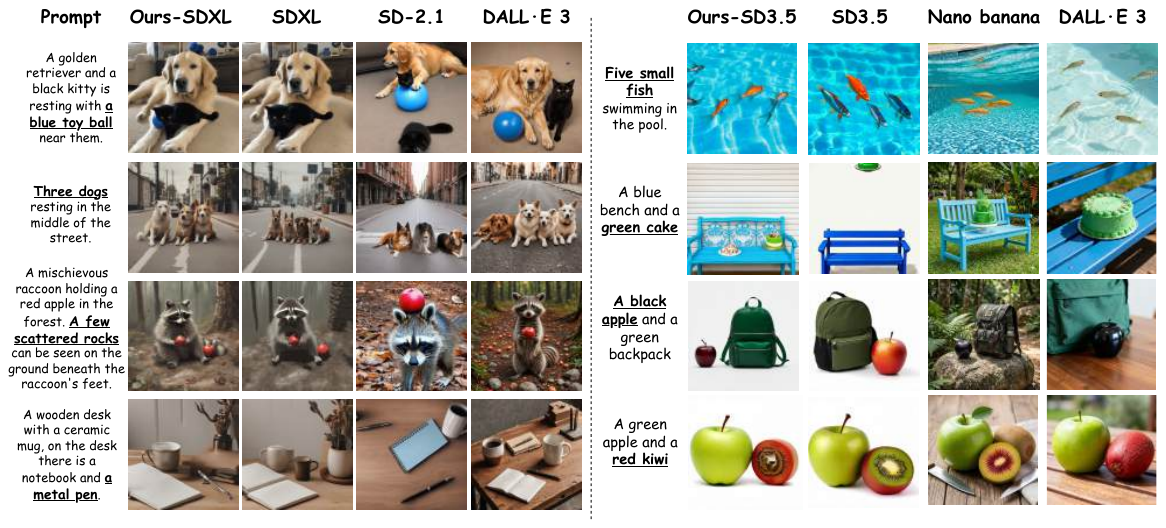} 
    \caption{\textbf{Examples of Delta-K.} By using SDXL~\citep{podellsdxl}, SD-2.1, Nano banana and DALL-E 3~\citep{betker2023improving} as baseline methods for comparison, we can easily observe that our approach achieves significant improvements in addressing the instance missing problem.}
    \label{fig:figure2}
\end{figure*}

\section{Experiment}
\subsection{Experimental Setup}

\paragraph{Implementation Details.} We implement our method across multiple state-of-the-art diffusion backbones, including Stable Diffusion XL~\citep{podellsdxl}, Stable Diffusion 3.5-medium~\citep{esser2024scaling}, and Flux-dev~\citep{flux2024}. We compare Delta-K with 1) training-free method containing Attend-and-Excitep(A\&E)~\citep{chefer2023attend}, SynGen~\citep{rassin2023linguistic}, InitNO~\citep{guo2024initno}; 2) earlier generative models containing Playground v2~\citep{li2024playground}, PixArt$\alpha$~\citep{chen2023pixart} and SD v2.1~\citep{rombach2022high};  3) Sota Models containing DALL-E~\citep{betker2023improving} and Seedream 3.0~\citep{gao2025seedream}. 

We choose Qwen3-VL~\citep{Qwen3VL} as VLM, set the maximum augmentation strength to $\alpha_t^{\max}=0.04$, and the augmentation is applied only during the first 10 denoising steps. This setup ensures the robustness of our approach across different architectural paradigms. More details are provided in Appendix.


\paragraph{Evaluation Benchmarks.}
We evaluate our method on three benchmarks: T2I-CompBench~\citep{huang2023t2i}, GenEval~\citep{ghosh2023geneval}, and ConceptMix~\citep{wu2024conceptmix}. T2I-CompBench~\citep{huang2023t2i} comprises 6,000 prompts covering attribute binding, object relationships, and complex compositions. GenEval~\citep{ghosh2023geneval} assesses compositional accuracy, testing capabilities such as object counting and spatial relationships. For ConceptMix~\citep{wu2024conceptmix},we focus on the harder subsets where $k \in \{5,6,7\}$ to test our method on complex multi-instance generation. These benchmarks provide a comprehensive assessment of our method's ability to handle multi-object compositions.

\begin{table*}[t]
\centering
\setlength{\tabcolsep}{3.2pt} 
\caption{Efficiency and general image quality comparison. Higher is better except inference time.}
\label{tab:speed_quality_mean}

\begin{tabular}{lcccccccccc}
\toprule
& \multicolumn{2}{c}{LAION-AES $\uparrow$}
& \multicolumn{2}{c}{CLIPScore $\uparrow$}
& \multicolumn{2}{c}{CLIP-IQA+ $\uparrow$}
& \multicolumn{2}{c}{MUSIQ $\uparrow$}
& \multicolumn{2}{c}{Infer(s/img) $\downarrow$} \\
\cmidrule(lr){2-3}
\cmidrule(lr){4-5}
\cmidrule(lr){6-7}
\cmidrule(lr){8-9}
\cmidrule(lr){10-11}
Model 
& Base & +Ours 
& Base & +Ours 
& Base & +Ours 
& Base & +Ours 
& Base & +Ours \\
\midrule
SDXL  & 5.63 & 5.62 & 0.79 & 0.77 & 0.69 & 0.70 & 70.67 & 70.12 & 11.71 & 14.92 \\
FLUX  & 5.51 & 5.48 & 0.78 & 0.77 & 0.73 & 0.73 & 70.62 & 70.19 & 32.43 & 42.11 \\
SD3.5 & 5.33 & 5.28 & 0.81 & 0.79 & 0.67 & 0.65 & 69.82 & 69.53 & 14.49 &  16.52\\
\bottomrule
\end{tabular}

\end{table*}

\begin{table*}[t]
\centering \begin{minipage}{0.38\textwidth} \centering 
\caption{Diff aug steps.}
\label{tab:step_ablation}

\begin{tabular}{lcc}
\toprule
Steps & Complex $\uparrow$ & Spatial $\uparrow$ \\
\midrule
5 steps  & 0.3448 & 0.2387 \\
10 steps & \textbf{0.3532} & \textbf{0.2466} \\
30 steps & 0.3491 & 0.2428 \\
50 steps & 0.3465 & 0.2404 \\
\bottomrule
\end{tabular} \end{minipage} \hfill \begin{minipage}{0.60\textwidth} \centering \caption{Diff VLMs.} \label{tab:vlm_comparison} \begin{tabular}{lll} \toprule Model & Complex $\uparrow$ & Spatial $\uparrow$ \\ \midrule GPT-4o~\citep{openai2024gpt4ocard} & 0.3404 & 0.2352 \\ Kimi-VL-A3B-thinking~\citep{kimiteam2025kimivltechnicalreport} & 0.3402 & 0.2352 \\ Qwen3-VL-8B-thinking~\citep{Qwen3VL} & 0.3402 & 0.2352 \\ Qwen-VL-Max~\citep{Qwen-VL} & 0.3400 & 0.2355 \\ \bottomrule \end{tabular} \end{minipage} \end{table*}

\subsection{Main Results}
\paragraph{Main Results.} We conduct a comprehensive quantitative evaluation of Delta-K against multiple baselines on T2I-CompBench, GenEval, and ConceptMix. As shown in Table~\ref{tab:t2i_compbench}, ~\ref{tab:geneval} and ~\ref{tab:conceptmix}, the results show that Delta-K significantly improves multi-instance generation without any additional training, achieving competitive performance across all metrics and outperforming existing training-free approaches.

Across all benchmarks, Delta-K consistently improves compositional generation. These gains align with our diagnosis in Sec.~\ref{subsec:window}: concept omission is decided early, and missing concepts exhibit diffuse attention. By injecting the differential key signature within the semantic planning phase, Delta-K changes what the query retrieves, which improves multi-entity layout formation and reduces downstream attribute/instance confusion. Quantitatively, on T2I-CompBench with SDXL, Delta-K improves the Complex score from 0.3230 to 0.3532 (+0.0302) and Spatial from 0.2111 to 0.2466 (+0.0355). On SD3.5-M, the Spatial metric increases from 0.3053 to 0.3487 (+0.0434), with consistent gains across attribute dimensions such as Shape (+0.0421) and Texture (+0.0435). Similar trends appear on GenEval, where the overall score improves from 0.55 to 0.58 and Two-object accuracy increases from 0.74 to 0.79. These improvements are particularly pronounced in spatial and compositional metrics, suggesting that the method primarily repairs failures caused by early competition in key space, while the smaller gains under higher mixing ratios indicate residual capacity limits when too many concepts must be anchored simultaneously.

\paragraph{Efficiency and General Quality.} We also ensure the  performance improvement does not come at the cost of inference efficiency or general image quality. We measure inference speed (avg img/s in T2I-Compbench) and evaluate aesthetic quality using LAION-AES~\citep{schuhmann2022laion}, CLIPScore~\citep{hessel2021clipscore}, CLIP-IQA+~\citep{wang2023exploring}, and MUSIQ~\citep{ke2021musiq}. As shown in Table~\ref{tab:speed_quality_mean}, our method achieves comparable speed and aesthetic scores to the baseline models. This indicates that our methods for multi-instance generation introduce negligible computational and do not degrade visual fidelity.
\begin{figure*}
    \centering  %
    \includegraphics[width=1\textwidth]{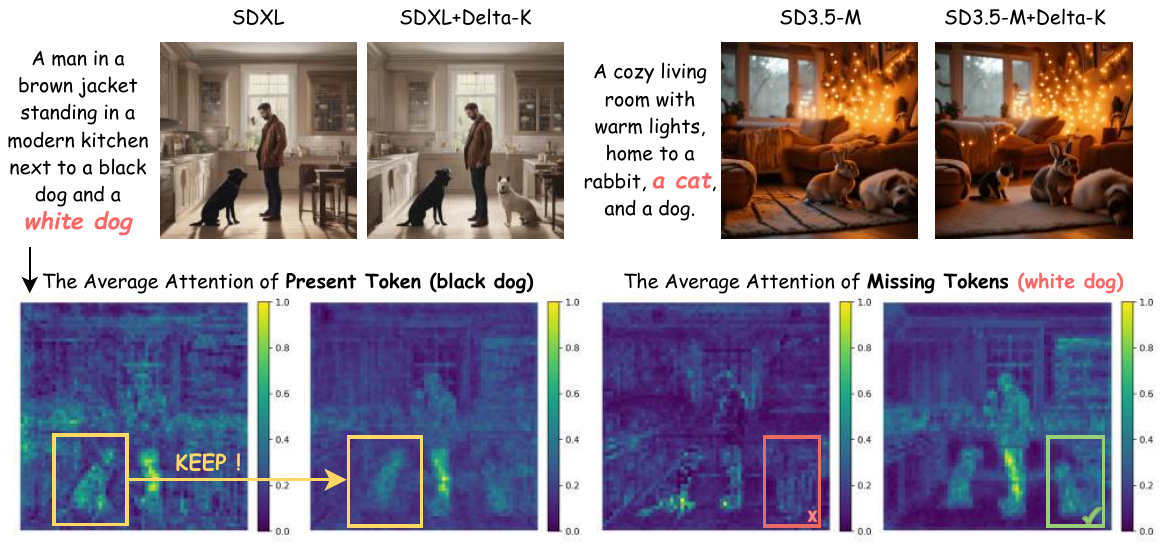} 
    \caption{\textbf{Qualitative rectification and cross-attention dynamics.} Top: Delta-K successfully recovers omitted instances across both SDXL and SD3.5. Bottom: Cross-attention heatmaps for the SDXL example. In the baseline, the attention map for the missing token (``white dog'') is scattered and noisy. Delta-K successfully focuses this attention into a highly localized region. Importantly, the attention for the present token (``black dog'') remains nearly unchanged, which demonstrates that Delta-K achieves targeted augmentation without interfering with present tokens.}
    \label{fig:case_qualitative}
\end{figure*}


\subsection{Ablation Study}

We conduct ablation studies on SDXL to analyze the contribution of each component in Delta-K. All variants are evaluated under the same generation setting. 

\paragraph{Scheduling Strategy.}
We study the impact of different augmentation strength scheduling methods. The full method adaptively optimizes the augmentation strength $\alpha_t$ at each denoising step via online optimization. We compare it with three simplified alternatives: 
\begin{itemize}
    \item Prompt-only. It only appends $C_{missing}$ into the original prompt.
    
    \item Constant-Strength (Constant). We apply fixed augmentation strength $\alpha=0.01$ throughout the diffusion process.  

    \item Linear-Strength (Linear). It adopts a linearly decaying schedule from $\alpha_{\max}=0.04$ and gradually decreasing to zero, i.e., $\alpha_t = \alpha_{\max}\max(0, 1 - t/T)$.
\end{itemize}

As shown in Table~\ref{tab:conceptmix}, the prompt-only baseline provides only marginal improvement, indicating that simply reiterating the missing concepts in the prompt does not effectively resolve the omission problem. Both constant and linear strength schedules yield better results, suggesting that explicitly reinforcing the semantic signal of missing concepts is beneficial. However, fixed or heuristic schedules cannot adapt to the varying attention dynamics across denoising steps. In contrast, our adaptive scheduling strategy consistently achieves the best performance. By dynamically adjusting the augmentation strength according to the current attention distribution, the method strengthens missing concepts when their representations are weak while avoiding excessive intervention once stable semantic anchors are formed. This adaptive behavior stabilizes cross-attention responses and leads to more reliable multi-instance generation.

\paragraph{Effect of the VLM Module.} We test several VLMs for concept detection. Table ~\ref{tab:vlm_comparison} shows that Delta-K is robust to the choice of VLM, indicating that its effectiveness mainly stems from the architectural design rather than the reasoning capability of a specific VLM. Specifically, we evaluate four representative VLMs with different architectures and parameter scales, including GPT-4o, Kimi-VL-A3B-thinking, Qwen3-VL-8B-thinking, and Qwen-VL-Max. For a fair comparison, all models are used only for concept detection while keeping the remaining components of Delta-K unchanged.  As shown in Table ~\ref{tab:vlm_comparison}, all VLMs yield nearly identical performance on both the Complex and Spatial metrics, with differences within a negligible margin. This result suggests that Delta-K does not rely on the reasoning strength of a particular VLM

\paragraph{Number of Augmented Steps.}
 We apply the augmentation in the first 5, 10, 30 and in all 50 steps. As shown in Table~\ref{tab:step_ablation}, first-10-step augmentation performs the best. Extending augmentation to later steps provides little additional benefit and may even disturb already formed spatial structures. This observation aligns with our earlier analysis, where concept omission becomes predictable during the early semantic planning stage (as indicated by the high AUC signal).

\begin{figure*}[h]
    \centering
    
    \begin{subfigure}[t]{0.48\textwidth} 
        \centering
        \includegraphics[width=\textwidth]{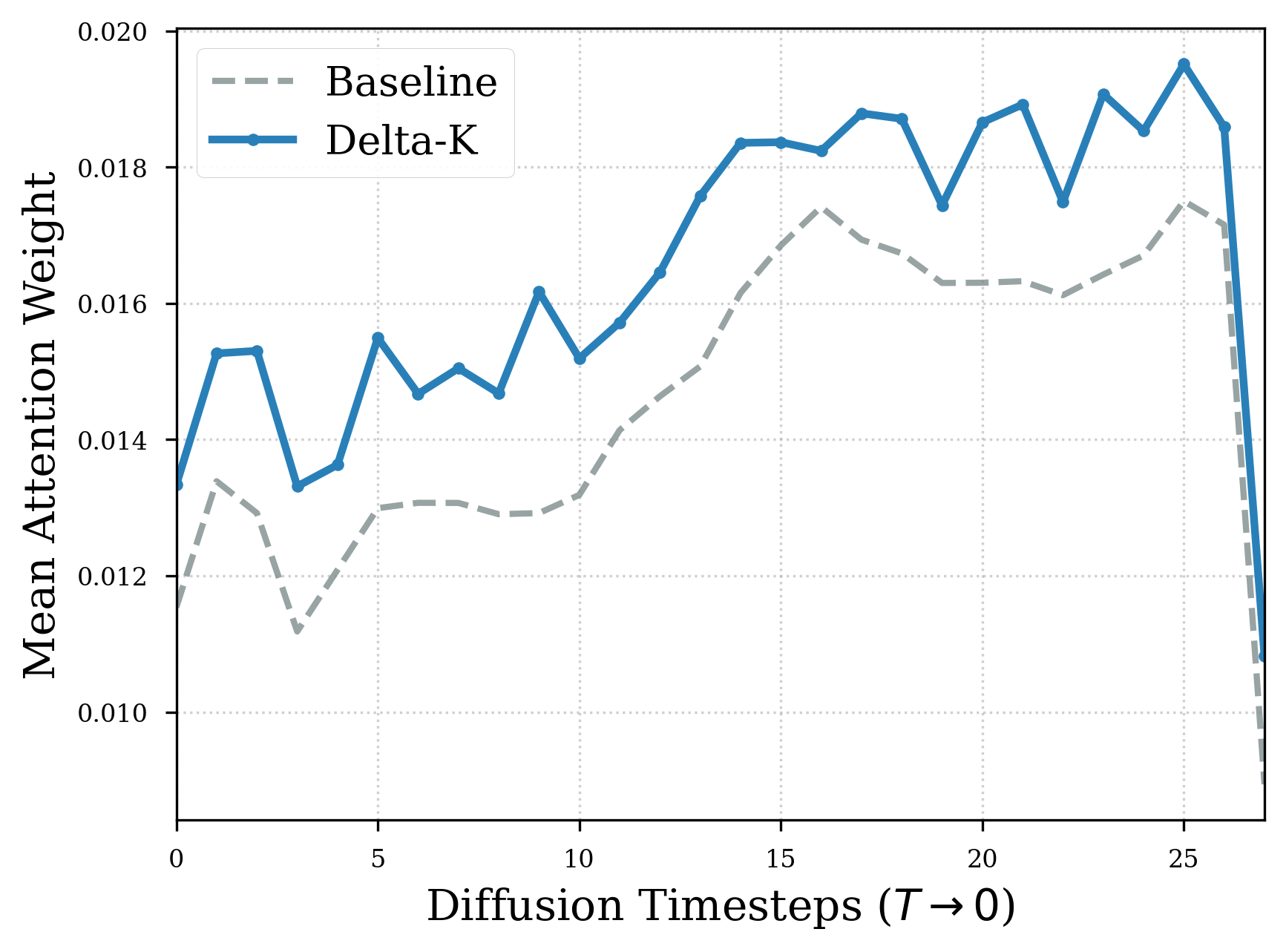}
        \caption{Evolution of Attention Weight}
        \label{fig:heatmap_sample}
    \end{subfigure}
    \hfill 
    \begin{subfigure}[t]{0.48\textwidth} 
        \centering
        \includegraphics[width=\textwidth]{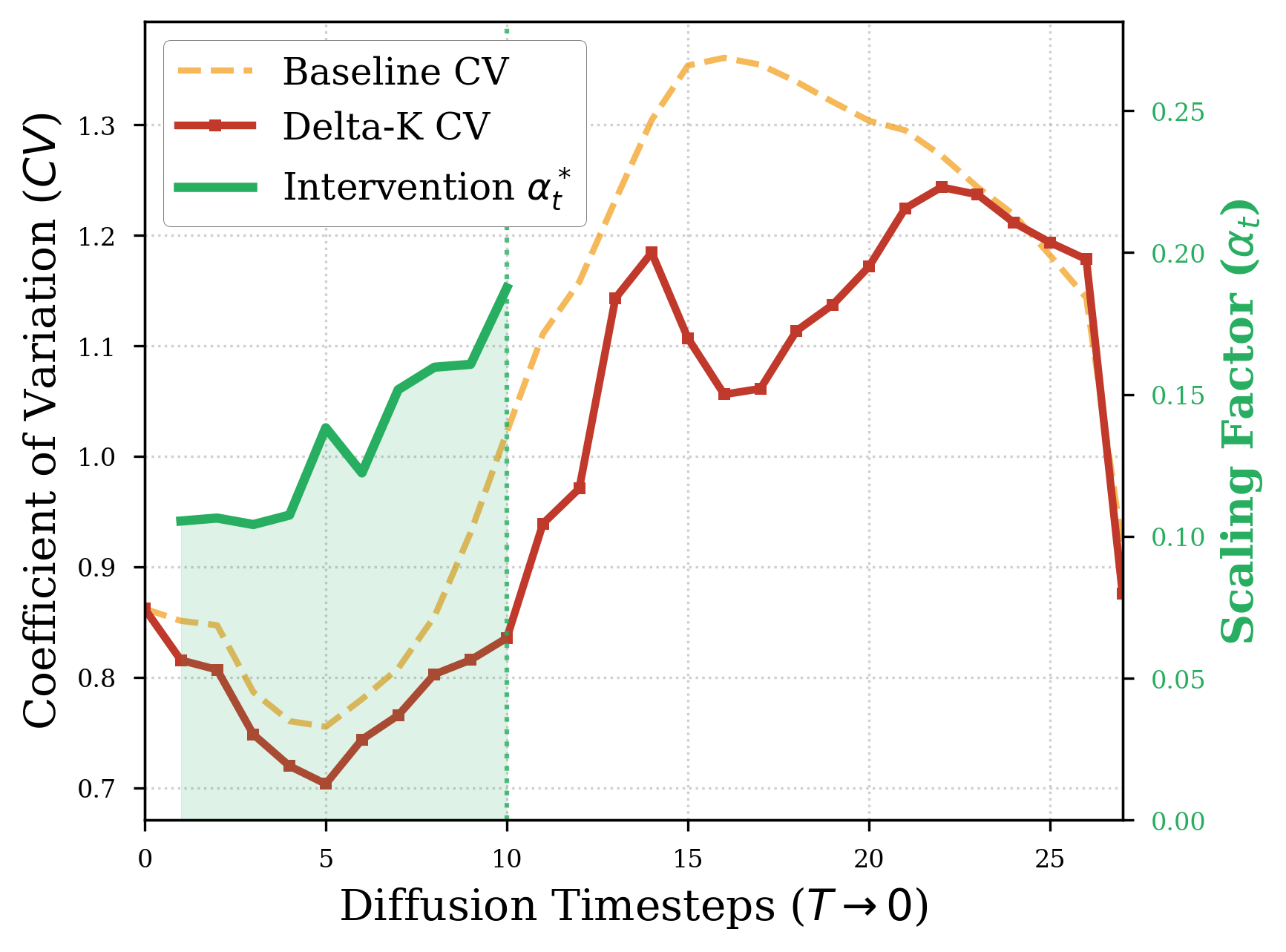}
        \caption{$CV$ and $\alpha_t$ Trajectory}
        \label{fig:heatmap_ours}
    \end{subfigure}

    \caption{\textbf{Quantitative analysis of temporal dynamics and adaptive scheduling.} \textbf{(a)} Evolution of Attention Weight: Delta-K increases the mean attention activation of the missing token, resolving the suppression observed in the baseline. \textbf{(b)} $CV$ and $\alpha_t$ Trajectory: The dynamic scheduler concentrates the intervention strength $\alpha_t$ during the early generation steps. This early injection significantly reduces spatial instability $CV$ compared to the baseline. By lowering the $CV$ while the image layout forms, Delta-K successfully focuses the scattered attention into a stable region.}
    \label{fig:row_comparison}
\end{figure*}

\subsection{Case Study: Unpacking Spatiotemporal Dynamics}
\label{subsec:casestudy}

To understand how Delta-K resolves concept omission, we analyze its spatiotemporal dynamics using two challenging prompts: $P_1$ for SDXL (\textit{``A man in a brown jacket standing in a modern kitchen next to a black dog and a white dog.''}) and $P_2$ for SD3.5 (\textit{``A cozy living room with warm lights, home to a rabbit, a cat, and a dog.''}). In both cases, the baseline fails to generate a specific instance (the ``white dog'' and the ``cat''), which our VLM preview correctly identifies.

\paragraph{Qualitative Rectification Across Architectures.} As depicted in Figure~\ref{fig:case_qualitative}, the baseline models drop the specified instances. Delta-K successfully synthesizes these missing concepts in both SDXL and SD3.5. Crucially, the recovered concepts integrate naturally without disrupting the global layout or the attributes of pre-existing entities (e.g., the black dog). This confirms the backbone-agnostic robustness of our key-space intervention.

\paragraph{Spatial Grounding: From Diffuse Noise to Structural Anchor.} 
We visualize the cross-attention heatmaps for $P_1$. For the missing concept (``white dog''), the baseline attention is scattered and unanchored. Under Delta-K, this diffuse noise rapidly focuses into a highly localized structural anchor. Meanwhile, the attention map for the present concept (``black dog'') remains perfectly unperturbed. This demonstrates that our differential key injection strictly targets the omitted semantics, naturally preserving existing concepts via key-space orthogonality without requiring explicit spatial masks.

We quantitatively track the generation process for $P_2$ in Figure~\ref{fig:row_comparison} to validate our dynamic scheduling. First, Figure~\ref{fig:row_comparison}(a) shows that Delta-K significantly elevates the mean attention activation of the missing token, resolving its early suppression. More importantly, Figure~\ref{fig:row_comparison}(b) plots spatial instability ($CV$) against our dynamically optimized augmentation strength ($\alpha_t$). While the baseline $CV$ remains stubbornly high (failing to localize), Delta-K concentrates $\alpha_t$ during the early steps, forcing the $CV$ to drop sharply. This proves that our adaptive scheduler does not merely scale up activations, but precisely guides the latent trajectory to establish a stable semantic representation exactly when the image layout forms.

\section{Limitations and Future Work}
Our work opens several promising directions for future research. A more fine-grained analysis of information flow within layers could provide deeper insights into how different layers contribute to concept formation during denoising. Besides, it would be valuable to design a more efficient trainable framework that builds upon our augmentation mechanism. Such a framework could learn to predict more effective augmentation signals or adapt intervention strategies across different prompts and model architectures, potentially improving both robustness and efficiency. Additionally, understanding how semantic signals propagate across attention heads and timesteps may further enable more precise and adaptive control over concept generation in diffusion models.

\section{Conclusion}
In this work, we revisit the problem of instance omission in text-to-image diffusion models and show that it originates from early-stage semantic mismatches rather than simple attention deficiency. Therefore, we propose Delta-K, a training-free method that intervenes directly in the Key space of cross-attention to reinforce missing concepts during generation. By extracting a differential semantic key vector and dynamically injecting it in the early denoising steps, Delta-K encourages missing concepts to receive stable attention and be correctly generated. Extensive experiments across multiple backbones and benchmarks demonstrate that our method consistently improves multi-instance generation while preserving inference efficiency and image quality. These results highlight the effectiveness of early semantic alignment in cross-attention for mitigating concept omission in diffusion models.

\bibliographystyle{plainnat}
\bibliography{main}

\newpage

\appendix
\section{Experiment details}

\paragraph{Implementation.}
All experiments are implemented in PyTorch and conducted on NVIDIA RTX4090 and A100 GPUs with FP16 precision. We evaluate three representative diffusion backbones: U-Net based Stable Diffusion XL 1.0, DiT-based Stable Diffusion 3.5-M, and Flux-dev. For visual--linguistic alignment, we employ a VLM with temperature set to 0 and strict JSON parsing to extract missing or under-represented concepts from the prompt. The extracted concepts are mapped to token indices using the CLIPTokenizer. Masked text embeddings are then constructed by replacing the corresponding tokens with $<|endoftext|>$ placeholders, which are used to compute the Delta Key values in the cross-attention key augmentation process.
The complete experimental configuration is summarized in Table~\ref{tab:full_config}. We also show our prompt for vlm in ~\ref{appendixb}
.

\begin{table*}[t]
\centering
\caption{Experiment details with hardware, VLM config, denoising model configuration, and Delta-K schedule.}
\label{tab:full_config}
\small
\setlength{\tabcolsep}{5pt}
\renewcommand{\arraystretch}{1.1}
\begin{tabular}{p{0.28\linewidth} p{0.22\linewidth} p{0.22\linewidth} p{0.22\linewidth}}
\toprule
\textbf{Category} & \textbf{SDXL} & \textbf{SD3.5} & \textbf{Flux} \\
\midrule

\textbf{Hardware} & & & \\
Framework & PyTorch & PyTorch & PyTorch \\
GPU & RTX4090 / A100 & A100 & A100 \\
Precision & FP16 & FP16 & FP16 \\
\midrule

\textbf{VLM} & & & \\
Temperature & 0 & 0 & 0 \\
Output format & JSON parsing & JSON parsing & JSON parsing \\
\midrule

\textbf{Denoising Model} & & & \\
Tokenizer & CLIP & CLIP \& T5 & T5 \\
Special placeholder & \texttt{<|endoftext|>} & \texttt{<pad>} \& \texttt{<|endoftext|>} & \texttt{<pad>} \\
Total denoising steps & 40 & 28 & 28 \\
\midrule

\textbf{Delta-K Schedule} & & & \\
$\alpha_t^{\max}$ & 0.03 & 3 & 3 \\
Learning rate $\eta$ & 0.001 & 0.002 & 0.002 \\
Iterations per step & 100 & 100 & 100 \\
Augmentation stage & down\_blocks\_1 & transformer\_blocks & transformer\_blocks \\

\bottomrule
\end{tabular}
\end{table*}

\paragraph{Attention Entropy Analysis.}
To determine the most suitable stage for augmentation, we analyze the stage-level attention entropy during the denoising process using prompt samples such as ``\textit{a living room with a sofa, a man, a brown table and a desk}''.

\begin{figure}[h]
    \centering
    \includegraphics[width=\linewidth]{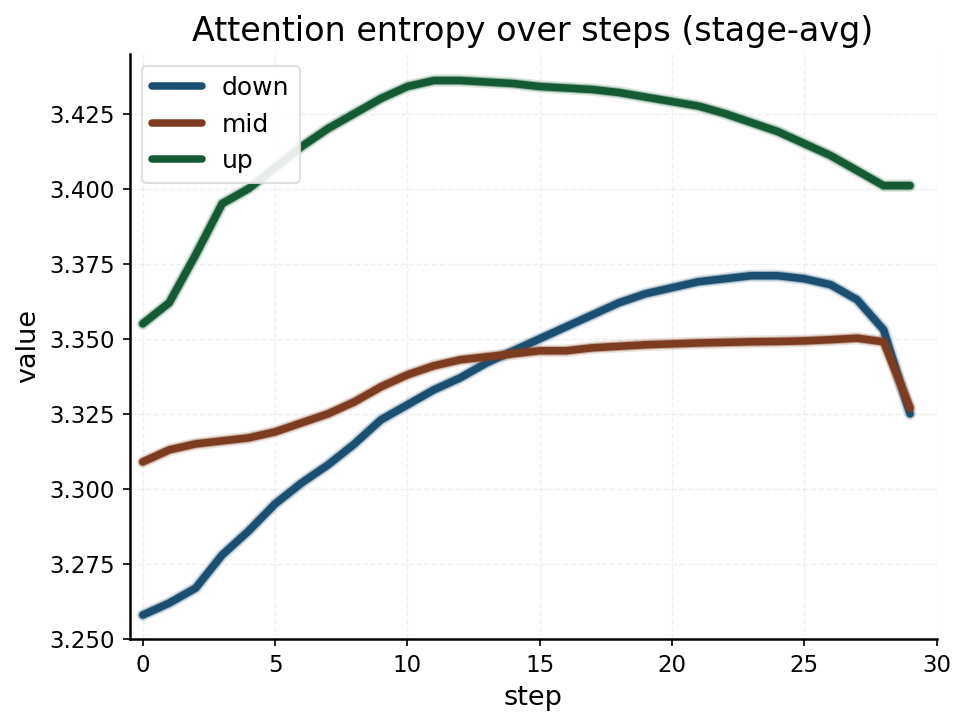}
    \caption{Attention entropy over denoising steps.}
    \label{fig:attention}
\end{figure}

We first study the evolution of attention dispersion across denoising steps. Intuitively, lower entropy indicates that attention is concentrated on a smaller set of key tokens, while higher entropy suggests a more diffuse allocation pattern. This helps reveal which stage is more responsible for establishing global semantic layout and which stage mainly refines local details.

Here $\mathrm{attn}_n(q,k)$ denotes the attention weight between the $q$-th query and $k$-th key in the $n$-th head, and $\mathrm{mass}_n$ represents the aggregated attention mass of each key token summed across all queries. This metric captures the dispersion--concentration dynamics of cross-attention over the denoising process. As shown in Fig.~\ref{fig:attention}, the Down stage exhibits noticeably lower entropy during early steps, indicating concentrated attention patterns responsible for establishing coarse semantic structure, whereas the Up stage maintains higher entropy and is more associated with detail refinement.

The entropy is defined as:
\begin{equation}
H_s(t) = \mathbb{E}_n \left[-\sum P_n \log P_n \right],
\end{equation}

where

\begin{equation}
P_n = \frac{\mathrm{mass}_n}{\sum_k \mathrm{mass}_n[k]}, \qquad
\mathrm{mass}_n = \sum_q \mathrm{attn}_n(q,k).
\end{equation}

Based on this observation, we apply the augmentation to \texttt{down\_block.1} of the U-Net architecture ($32\times32$ resolution) and restrict the intervention to the first 10 denoising steps. To ensure fair comparisons, all scheduling strategies operate within the same active window. The linear baseline schedules decay linearly from their peak strength, while the proposed Delta-K schedule dynamically adjusts the augmentation coefficients through test-time optimization.

\section{Theoretical Analysis of Delta-K}
\label{sec:theoretical_analysis}

In this section, we provide formal mathematical justifications for the empirical behaviors observed in the main text. Specifically, we analyze why Delta-K does not significantly interfere with successfully generated concepts in~\ref{sec:orthogonality} and how it promotes the spatial concentration of diffuse attention maps in~\ref{sec:cv_attenuation}. 

\begin{assumption}[Local Intervention]
\label{assum:local_intervention}
Throughout our analysis, we isolate the effect of Delta-K within a single cross-attention block. We assume that for a given forward pass, the visual query vectors $Q$ are fixed, and we analyze the immediate perturbation caused by injecting $\Delta K$. We omit the compounding non-linear effects of Layer Normalization and Feed-Forward Networks across consecutive layers, as our objective is to characterize the local perturbation behavior in the $QK^T$ semantic matching space.
\end{assumption}

\begin{figure*}[t]
    \centering
    \includegraphics[width=\linewidth]{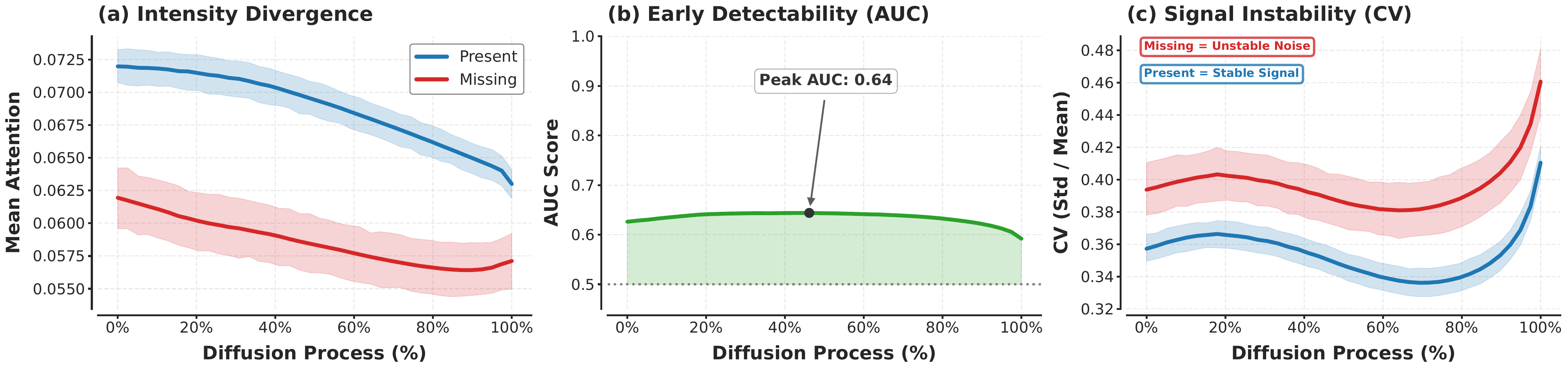} 
    \caption{Spatiotemporal analysis of cross-attention dynamics in SDXL. (a) Intensity Divergence: Mean attention scores for ``Present'' vs. ``Missing'' concepts. (b) Early Detectability: AUC score for detecting omission; the curve remains relatively stable across steps with a peak AUC of 0.64. (c) Signal Stability: Coefficient of Variation (CV) showing that missing concepts correspond to unstable, high-variance attention patterns.}
    \label{fig:analysis_sdxl}
\end{figure*}

\subsection{Semantic Orthogonality and Non-Interference}
\label{sec:orthogonality}

A key advantage of Delta-K is that it enhances omitted concepts while preserving already established semantic bindings. This behavior can be understood through the concentration properties of high-dimensional representations.

\begin{theorem}[Semantic Orthogonality Bound]
\label{thm:orthogonality}
Let the cross-attention dimension be $d_k$. Let $Q_\text{present} \in \mathbb{R}^{d_k}$ denote the query vector associated with a successfully generated concept, and $\Delta K \in \mathbb{R}^{d_k}$ denote the differential key vector extracted through the VLM-guided masking procedure. Assume that embedding vectors follow a sub-Gaussian distribution with variance proxy $\sigma^2$. Under an augmentation strength $\alpha_t$, the probability that the induced perturbation $\delta$ on the pre-softmax attention logit exceeds a threshold $\epsilon>0$ satisfies:
\begin{equation}
\mathbb{P}\left(|\delta|\ge\epsilon\right)
\le
2\exp\left(
-\frac{c\epsilon^2 d_k}{\alpha_t^2\|Q_\text{present}\|_2^2\|\Delta K\|_2^2}
\right),
\end{equation}
where $c>0$ is a constant determined by the sub-Gaussian norm of the embeddings.
\end{theorem}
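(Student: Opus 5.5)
The plan is to identify the perturbation explicitly and then apply a standard sub-Gaussian tail bound. Under Assumption~\ref{assum:local_intervention} the query is frozen, so replacing $K$ by $K'=K+\alpha_t\Delta K$ changes the pre-softmax logit for $Q_\text{present}$ by
\begin{equation}
\delta=\frac{Q_\text{present}^\top (K+\alpha_t\Delta K)}{\sqrt{d_k}}-\frac{Q_\text{present}^\top K}{\sqrt{d_k}}=\frac{\alpha_t}{\sqrt{d_k}}\,\langle Q_\text{present},\Delta K\rangle .
\end{equation}
Everything therefore reduces to a concentration statement for the single inner product $\langle Q_\text{present},\Delta K\rangle$. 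We then need to control how large this inner product can be when the present-concept query and the differential key are, in the stated probabilistic sense, generic directions in $\mathbb{R}^{d_k}$.

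To make this precise, we condition on the magnitudes and treat the direction of $\Delta K$ as random. Write $\Delta K=\|\Delta K\|_2\,u$ with $u$ a unit random vector. The sub-Gaussian hypothesis, read as isotropic sub-Gaussianity of the embedding directions, means that for any fixed vector $x$ the scalar $\langle x,u\rangle$ is sub-Gaussian with variance proxy of order $\sigma^2\|x\|_2^2/d_k$. When $u$ is uniform on the sphere this is exactly spherical concentration: $\langle x,u\rangle$ has $\psi_2$-norm $\lesssim \|x\|_2/\sqrt{d_k}$.

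Applying this with $x=Q_\text{present}$, the variable $\delta$ is centered and sub-Gaussian with $\psi_2$-norm at most
\begin{equation}
C\,\frac{\alpha_t\,\|Q_\text{present}\|_2\,\|\Delta K\|_2}{\sqrt{d_k}\,\sqrt{d_k}}\le C\,\frac{\alpha_t\,\|Q_\text{present}\|_2\,\|\Delta K\|_2}{\sqrt{d_k}} .
\end{equation}
The second inequality simply discards one extra factor of $\sqrt{d_k}$, which can only weaken the bound. The Hoeffding-type tail bound $\mathbb{P}(|X|\ge\epsilon)\le 2\exp(-c\epsilon^2/\|X\|_{\psi_2}^2)$ then gives exactly the displayed inequality. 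The constant $c$ absorbs $C$ and $\sigma$.

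The main obstacle is justifying that $\langle Q_\text{present},\Delta K\rangle$ is centered, with dimension-scaled variance proxy, even though $\Delta K$ is a deterministic function of the prompt. The honest route is to make this an explicit modeling step inside the hypothesis. We posit that, conditional on the present-concept query, the direction of $\Delta K$ is an isotropic sub-Gaussian vector. This is where "semantic orthogonality" enters: $\Delta K$ is built by masking only the missing tokens, so it carries no component systematically aligned with present concepts. A general independent sub-Gaussian vector, with coordinates of variance proxy $\sigma^2/d_k$ after normalization, gives the same $\psi_2$ bound via the rotation-invariance property of sums of independent sub-Gaussians. We would state this reduction, rather than spherical uniformity, as the precise meaning of the theorem's assumption. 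Once that is fixed, the remaining steps are routine.
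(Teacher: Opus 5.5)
Your argument is correct and follows the paper's route. You isolate $\delta=\frac{\alpha_t}{\sqrt{d_k}}\langle Q_{\text{present}},\Delta K\rangle$ and apply sub-Gaussian concentration to that inner product, but you make precise what the paper only asserts (``can be treated as a sum of sub-Gaussian random variables'') by conditioning on $Q_{\text{present}}$ and on the norms and by assuming a centered isotropic direction for $\Delta K$; this conditioning also avoids the sub-exponential tail that a product of two random sub-Gaussian vectors would give.

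Your remark about discarding a factor $\sqrt{d_k}$ is well taken. After that step the displayed bound already follows from Cauchy--Schwarz alone with $c=\ln 2$: since $|\delta|\le \alpha_t\|Q_{\text{present}}\|_2\|\Delta K\|_2/\sqrt{d_k}$, the left side vanishes whenever the right side is below $1$. So the informative statement is the sharper $d_k^2$-exponent version you had before weakening.
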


\begin{proof}
In the baseline generation process, the attention logit associated with a present concept is
\begin{equation}
S_\text{base} =
\frac{1}{\sqrt{d_k}}
\langle Q_\text{present}, K_\text{present}\rangle.
\end{equation}

Under Delta-K augmentation, the key vector becomes
\begin{equation}
K' = K + \alpha_t\Delta K.
\end{equation}

The updated logit therefore becomes
\begin{equation}
S_\text{new}
=
\frac{\langle Q_\text{present},K+\alpha_t\Delta K\rangle}{\sqrt{d_k}}
=
S_\text{base}
+
\underbrace{
\frac{\alpha_t}{\sqrt{d_k}}
\langle Q_\text{present},\Delta K\rangle
}_{\delta}.
\end{equation}

The perturbation $\delta$ depends on the inner product between $Q_\text{present}$ and $\Delta K$. Since $\Delta K$ captures the residual semantic component introduced by the missing concept relative to the masked prompt, its direction is typically weakly correlated with query vectors associated with already established concepts. 

Under the sub-Gaussian embedding assumption, the inner product $\langle Q_\text{present},\Delta K\rangle$ can be treated as a sum of sub-Gaussian random variables. Applying a standard concentration inequality for inner products of sub-Gaussian vectors yields
\begin{equation}
\mathbb{P}\left(
\frac{\alpha_t}{\sqrt{d_k}}
|\langle Q_\text{present},\Delta K\rangle|
\ge\epsilon
\right)
\le
\exp\left(
\frac{-2c\epsilon^2 d_k}{\alpha_t^2\|Q_\text{present}\|_2^2\|\Delta K\|_2^2}
\right).
\end{equation}

Hence the perturbation magnitude decays exponentially with the embedding dimension $d_k$, implying that the influence on already established concepts is negligible with high probability.
\end{proof}

\textbf{Remark.} Theorem~\ref{thm:orthogonality} indicates that the perturbation introduced by $\Delta K$ rapidly diminishes as the representation dimension increases. Combined with the bounded augmentation strength imposed by the dynamic scheduler (e.g., $\alpha_t^{max}=0.03$), this ensures that Delta-K primarily influences queries aligned with the missing concept while leaving existing concepts largely unaffected.

\subsection{Attention Focusing and Spatial Stabilization}
\label{sec:cv_attenuation}

In Section~3.2, we observed that missing concepts typically exhibit diffuse and unstable attention patterns. We now analyze how Delta-K reshapes this distribution by redistributing attention mass toward semantically aligned regions.

\begin{theorem}[Attention Mass Concentration]
\label{thm:cv}
Let $A\in\mathbb{R}^N$ denote the normalized attention distribution of a missing concept across $N$ spatial tokens. Let $\mathcal{I}_\text{target}$ denote spatial locations corresponding to the correct concept region. Under Assumption~\ref{assum:local_intervention}, suppose that injecting $\alpha_t\Delta K$ produces a positive logit shift $\Delta s>0$ for tokens in $\mathcal{I}_{target}$ while leaving background logits approximately unchanged. Then the updated attention distribution $A'$ satisfies:
\begin{equation}
P'_\text{target} > P_\text{target},
\end{equation}
where
\[
P_\text{target}=\sum_{i\in\mathcal{I}_\text{target}}A_i.
\]
\end{theorem}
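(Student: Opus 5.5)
We write the attention distribution as a softmax over logits $s_i$, so that $A_i = e^{s_i}/Z$ with $Z=\sum_{j=1}^N e^{s_j}$. The hypothesis says that after injection the logits become $s_i' = s_i + \Delta s$ for $i\in\mathcal{I}_\text{target}$ and $s_i' = s_i$ otherwise. Under Assumption~\ref{assum:local_intervention}, $Q$ is fixed, so $A'$ is just the softmax of these shifted logits. We read ``approximately unchanged'' as exactly unchanged for the main argument, and at the end we note a robustness margin.

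The key step is to write the target mass in closed form. Set $T=\sum_{i\in\mathcal{I}_\text{target}} e^{s_i}$ and $B=\sum_{i\notin\mathcal{I}_\text{target}} e^{s_i}$. Then
\[
P_\text{target}=\frac{T}{T+B}, \qquad P'_\text{target}=\frac{e^{\Delta s}T}{e^{\Delta s}T+B}.
\]
The map $x\mapsto x/(x+B)$ is strictly increasing on $(0,\infty)$ whenever $B>0$. Since $e^{\Delta s}>1$ and $T>0$, we get $e^{\Delta s}T > T$, and therefore $P'_\text{target}>P_\text{target}$.

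Equivalently, we can write the odds $P_\text{target}/(1-P_\text{target}) = T/B$. After the shift these odds are multiplied by exactly $e^{\Delta s}$. This form also gives the quantitative gain
\[
P'_\text{target}-P_\text{target}=\frac{P(1-P)(e^{\Delta s}-1)}{1+P(e^{\Delta s}-1)},
\]
where $P=P_\text{target}$. The gain is strictly positive whenever $0<P<1$.

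The main obstacle is handling the degenerate cases and the word ``approximately''. If $\mathcal{I}_\text{target}$ is empty, or is all of $\{1,\dots,N\}$, then $P=0$ or $P=1$ and the inequality fails to be strict. We therefore assume $\mathcal{I}_\text{target}$ is a nonempty proper subset; softmax weights are strictly positive, so this gives $0<P<1$. For the approximate case, suppose the background logits move by at most $\eta$ in absolute value. Then the odds are multiplied by a factor of at least $e^{\Delta s-\eta}$, and strict increase still holds whenever $\eta<\Delta s$. Theorem~\ref{thm:orthogonality} can be invoked to justify that $\eta$ is small with high probability.

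Finally, we would connect this to the CV narrative. Concentrating mass on a fixed subset raises the spatial variance relative to the mean. The mean stays $1/N$ because $A'$ is still normalized, so the CV moves in the direction reported in Figure~\ref{fig:row_comparison}. We present this only as a remark, not as part of the theorem's claim.
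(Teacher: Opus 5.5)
Your proof is correct and uses the same mechanism as the paper: a softmax over logits shifted on $\mathcal{I}_\text{target}$, with the background fixed and $e^{\Delta s}>1$. Your decomposition is different, though, and it actually repairs the paper's argument. The paper works token by token, writing $A'_i = A_i e^{\Delta s_i}/(A_i e^{\Delta s_i}+1-A_i)$ and summing over $\mathcal{I}_\text{target}$. That expression is the new weight of token $i$ only when $i$ is the \emph{sole} shifted token. When the whole region is shifted, the denominator must be $\sum_{j\in\mathcal{I}_\text{target}} A_j e^{\Delta s_j}+1-P_\text{target}$; for $\mathcal{I}_\text{target}=\{1,\dots,N\}$ the paper's sum would exceed $1$, while the true $P'_\text{target}$ stays $1$. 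Aggregating into $T$ and $B$ and using monotonicity of $x\mapsto x/(x+B)$ is the correct version. It also yields the exact gain, the condition $0<P_\text{target}<1$ that the paper never states, and the margin $\eta<\Delta s$ that makes ``approximately unchanged'' precise.

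The paper allows token-dependent shifts $\Delta s_i$. Your argument covers this verbatim by replacing $e^{\Delta s}T$ with $\sum_{i\in\mathcal{I}_\text{target}} e^{s_i+\Delta s_i}>T$. The odds are then multiplied by an $e^{s_i}$-weighted average of the $e^{\Delta s_i}$, hence by at least $e^{\min_i\Delta s_i}$. In that case only the aggregate claim holds, since a target token with a small shift can lose mass to one with a large shift.

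Your closing CV remark, by contrast, should be dropped. With the mean fixed at $1/N$, the CV is monotone in $\sum_i (A'_i)^2$, and a subset shift can lower this as well as raise it: shifting the first entry of $A=(0.2,0.8)$ can reach $(0.5,0.5)$. In the case you describe, where the variance rises, the CV rises too, which is the opposite of the drop in CV the paper reports.
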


\begin{proof}
The original attention probability for token $i$ is:
\begin{equation}
A_i=\frac{\exp(s_i)}{\sum_k\exp(s_k)}.
\end{equation}

After Delta-K injection, logits within the target region become:
\begin{equation}
s_i' = s_i + \Delta s_i,
\end{equation}
where:
\begin{equation}
\Delta s_i =
\frac{\alpha_t}{\sqrt{d_k}}
\langle Q_i,\Delta K\rangle.
\end{equation}

For tokens in $\mathcal{I}_\text{target}$, semantic alignment implies $\Delta s>0$. Background tokens receive negligible perturbation due to the orthogonality property established in Theorem~\ref{thm:orthogonality}.

Let:
\begin{equation}
P_\text{target}=\sum_{i\in\mathcal{I}_\text{target}}A_i.
\end{equation}

After the logit shift, the new probability mass assigned to the target region becomes:
\begin{equation}
A'_{i}=
\frac{A_{i}\exp(\Delta s_i)}
{A_{i}\exp(\Delta s_i)+(1-A_{i})}.
\end{equation}

Since $\forall i \in \mathcal{I}_\text{target},\ \exp(\Delta s_i)>1$, it follows that:
\begin{equation}
P'_\text{target}=\sum_{i\in\mathcal{I}_\text{target}}A'_i>P_\text{target}.
\end{equation}

Consequently, attention mass is redistributed from diffuse background regions toward semantically aligned spatial locations. This reallocation increases the concentration of attention around the correct concept region and suppresses background noise.
\end{proof}

\textbf{Remark.} Theorem~\ref{thm:cv} shows that Delta-K increases the attention mass assigned to semantically aligned regions through a positive logit shift. Empirically, this redistribution often corresponds to more stable and spatially localized attention maps, which explains the reduction of instability indicators such as the Coefficient of Variation observed in our experiments.

\begin{figure*}[t]
    \centering
    \includegraphics[width=1\textwidth]{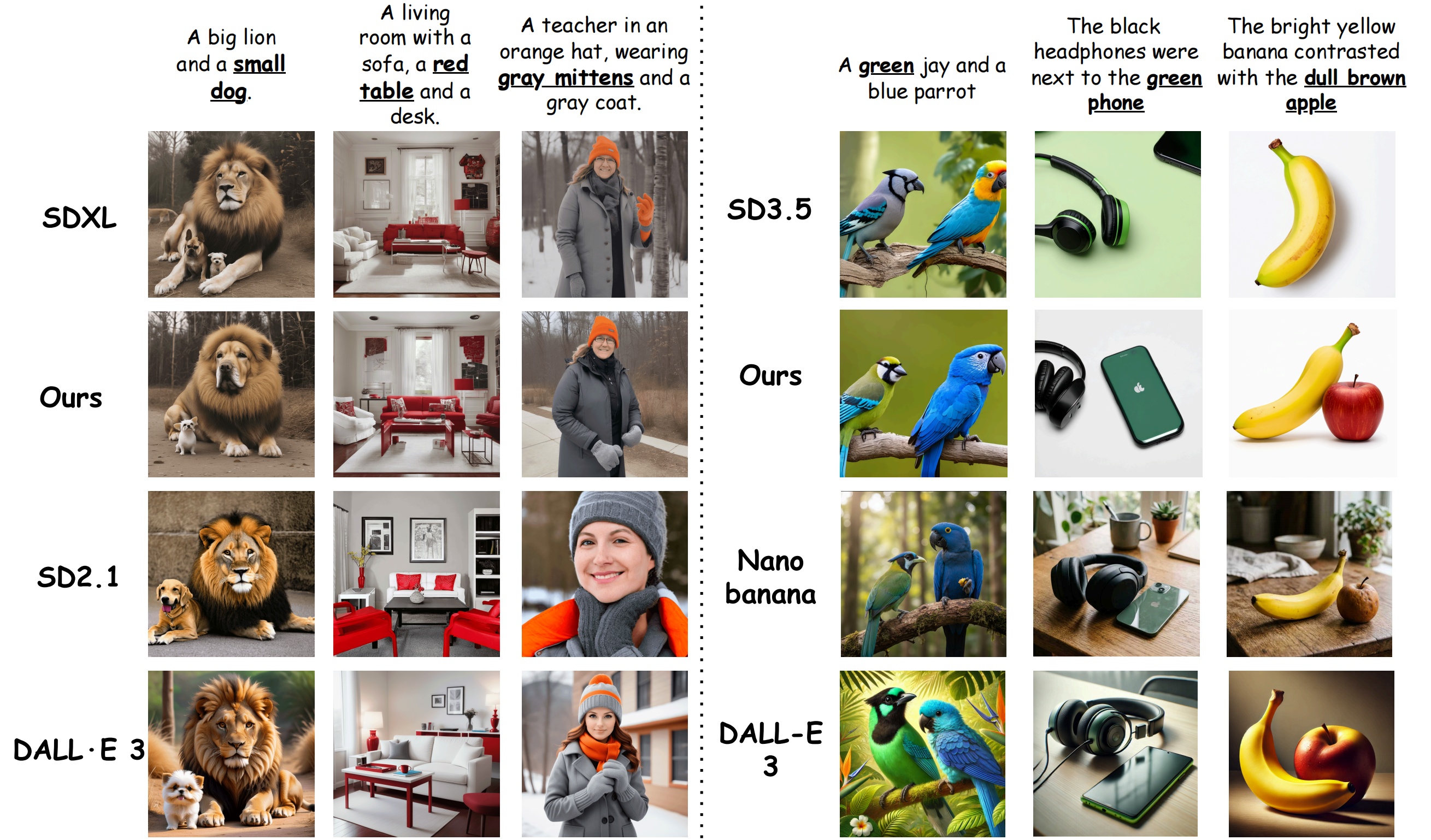}
    \captionof{figure}{More examples.}
    \label{samples}
\end{figure*}
\section{Extended Analysis on SDXL Architecture}
\label{sec:analysis_sdxl}

To verify the generalizability of our motivation across different architectural paradigms, we extend our analysis to SDXL, a representative U-Net based Latent Diffusion Model. Figure~\ref{fig:analysis_sdxl} illustrates the spatiotemporal dynamics of cross-attention during multi-instance generation, corresponding to the analysis of SD3.5 presented in the main text.

\paragraph{Intensity Divergence.}
As shown in Figure~\ref{fig:analysis_sdxl}(a), the divergence in attention intensity between successfully generated (``Present'') and omitted (``Missing'') concepts remains evident. Consistent with our observations in SD3.5, ``Present'' concepts maintain higher mean attention scores throughout the diffusion process, whereas ``Missing'' concepts persistently exhibit lower intensity. This confirms that the failure to aggregate sufficient attention magnitude is a shared characteristic of concept omission in both U-Net and DiT architectures.

\paragraph{Early Detectability.}
Figure~\ref{fig:analysis_sdxl}(b) quantifies the detectability of omission failures using the AUC metric. Unlike the sharp peak observed in SD3.5, the AUC curve for SDXL demonstrates relative stability across the diffusion steps. Although the peak AUC value of approximately 0.64 is not necessarily attained at the very initial steps, the variation in AUC scores throughout the process is minimal. This suggests that while the peak discriminative signal is moderate compared to SD3.5, the reliability of detection in SDXL is sustained over a broader range of steps rather than being confined to a fleeting early phase.

\paragraph{Signal Stability.}
The analysis of spatial stability, measured by the Coefficient of Variation (CV), mirrors the findings in SD3.5. As depicted in Figure~\ref{fig:analysis_sdxl}(c), ``Present'' concepts are characterized by low CV values, indicating spatially concentrated and stable attention regions. In contrast, ``Missing'' concepts show persistently high CV values, reflecting scattered, noise-like attention distributions. This distinction highlights that regardless of the backbone architecture, concept omission is fundamentally linked to the inability to form a stable spatial representation in the latent space.

\section{Prompt for VLM}
\label{appendixb}

\begin{tcolorbox}[
    title=Prompts for Visual Fidelity Verification,
    colback=white,
    colframe=black,
    boxrule=0.5pt,
    arc=1mm,
    left=1mm,
    right=1mm,
    top=1mm,
    bottom=1mm,
    breakable,
    width=\linewidth
]

\# \textit{INPUT: Text Prompt, Generated Image}

You are a high-precision vision QA tool. Your task is to verify the visual fidelity of an image against a specific text prompt by focusing on concrete entities and their immediate modifiers.

\textbf{PROMPT:} \textit{\{prompt\}}

\textbf{Evaluation Steps:}
\begin{enumerate}[label=\arabic*), leftmargin=1.0cm,nosep]
\item \textbf{Entity-Attribute Extraction:}
Deconstruct the prompt into specific nouns or short \textit{modifier-noun} phrases (e.g., vintage clock, golden retriever, marble floor). Ignore broad actions or complex clauses.

\item \textbf{Visual Strictness Check:}
For each extracted phrase, verify whether the visual evidence exactly matches every word in that phrase. If the prompt specifies a transparent glass bottle but the image shows an opaque plastic bottle, the phrase should be marked as missing.

\item \textbf{Reasoning:}
Evaluate whether the specific physical attributes (color, texture, material, quantity, or relative position) described in the prompt are visibly evident in the image.
\end{enumerate}

\textbf{Task:}
\begin{enumerate}[label=\arabic*), leftmargin=1.0cm,nosep]
\item \textbf{present\_tokens:}
List up to top \textit{k} noun-based phrases or entities from the PROMPT where all modifiers are correctly and clearly depicted.

\item \textbf{missing\_tokens:}
List up to top \textit{k} noun-based phrases or entities from the PROMPT that are absent, have incorrect modifiers (e.g., wrong color or material), or are visually ambiguous.
\end{enumerate}

\textbf{Strict Rules:}
\begin{itemize}[label=-, leftmargin=0.8cm,nosep]
\item Extract only nouns or short adjective/attribute + noun structures.
\item Use the exact wording from the PROMPT. No paraphrasing.
\item Be strict with modifiers. If a ``blue silk tie'' appears as a ``blue wool tie'', it must be listed under missing\_tokens.
\item Return a strict JSON object only. No markdown or additional explanations.
\end{itemize}

\textbf{Output Format:}

{\ttfamily
\small
\{
"present\_tokens": [],
"missing\_tokens": []
\}
}

\end{tcolorbox}

\section{More Related Work}
\paragraph{Advanced Layout and Regional Control.} 
While explicit spatial adapters impose structural priors, recent advancements delve deeper into decomposing the multi-instance generation into localized sub-tasks~\citep{zhou2024migc,li2025mccd}or decoupling spatial planning from DiT-based detail rendering~\citep{zhou20243dis}. Further refinements align region-specific embeddings with orientation constraints~\citep{parihar2025compass}, high-frequency attention modulations, or dense typographic layouts~\citep{zhang2026freetext}, structurally enhancing the precision of multi-object positioning and attribute binding without demanding holistic retraining. Moreover, these spatial routing mechanisms have been fundamentally extended into 3D domains, enabling coherent multi-object scene generation via multi-instance diffusion~\citep{huang2025midi}.
\paragraph{LLM and VLM-Assisted Composition.} 
Beyond standard text encoders, incorporating Large Language Models (LLMs) as external cognitive planners significantly mitigates the omission of rare concepts~\citep{park2024rare}, initializes structured composite priors~\citep{khan2025composeanything}, and facilitates complex scene graph reasoning during inference~\citep{mishra2025compositional,peng2025ld}. Simultaneously, Vision-Language Models (VLMs) are increasingly integrated into the denoising trajectory as real-time semantic evaluators~\citep{lv2025multimodal}. This enables dynamic closed-loop optimizations—such as adaptive negative prompting~\citep{golan2025vlm} and iterative reward feedback learning—to ensure strict prompt-image alignment in dense visual scenes.
\paragraph{Deep Feature Modulation and Personalization.} 
Expanding upon superficial attention reweighting, emerging training-free paradigms actively intervene in deeper orthogonal feature representations, such as the Value space, for disentangled object-style blending~\citep{jin2026tp} and step-wise semantic injection~\citep{choi2026stepwise,cai2025ditctrl}. Particularly in multi-subject personalization tasks, establishing explicit semantic correspondence or employing layout-guided feature resamplers efficiently prevents catastrophic identity fusion. Recent efforts further eliminate the need for explicit spatial masks by leveraging self-supervised dual representation alignments or attention-based concept disentanglement~\citep{zhang2025conceptcraft}. These robust multi-instance identity preservation techniques are also being translated into spatiotemporal domains via intent-aware modulation and temporal-wise separable attention.
\section{More Examples of Delta-K}
We provide more examples of Delta-K comparing to baseline model and sota closed-source models in ~\ref{samples}.

\end{document}